\newcommand{\abs}[1]{\left| #1\right|}
\newcommand{\norm}[1]{\left\lVert#1\right\rVert}
\newcommand{\br}[1]{\left\{#1\right\}}
\newcommand{\REAL}{\ensuremath{\mathbb{R}}}
\newcommand{\eps}{\varepsilon}
\newcommand{\dist}{\mathrm{dist}}
\newcommand{\cost}{\mathrm{cost}}
\newcommand{\mat}{\mathrm{mat}}
\newcommand{\SO}{\mathrm{SO}}
\newcommand{\NPnP}{NPnP}
\newcommand{\Alignments}{\mathrm{Alignments}}
\newcommand{\PSD}{\mathcal{P}}
\newcommand{\PnPAlg}{\textsc{NPnP}}
\newtheorem{theorem}{Theorem}
\newtheorem{assumption}[theorem]{Assumption}
\title{\LARGE \bf
Newton-PnP:\\
Real-time Visual Navigation for Autonomous Toy-Drones
}
\author{Ibrahim Jubran and Fares Fares and Yuval Alfassi and Firas Ayoub and Dan Feldman
\thanks{*This work was not supported by any organization}
\thanks{The authors are with the Robotics \& Big Data Labs, Computer Science Department, University of Haifa, Israel.
Corresponding author: {\tt\small ibrahim.jub@gmail.com}}%
}
\begin{document}

\maketitle
\thispagestyle{empty}
\pagestyle{empty}

\begin{abstract}
The Perspective-n-Point problem aims to estimate the relative pose between a calibrated monocular camera and a known 3D model, by aligning pairs of 2D captured image points to their corresponding 3D points in the model.
We suggest an algorithm that runs on weak IoT devices in real-time but still provides provable theoretical guarantees for both running time and correctness. Existing solvers provide only one of these requirements.
Our main motivation was to turn the popular DJI's Tello Drone (<90gr, <\$100) into an autonomous drone that navigates in an indoor environment with no external human/laptop/sensor, by simply attaching a Raspberry PI Zero (<9gr, <\$25) to it. This tiny micro-processor takes as input a real-time video from a tiny RGB camera, and runs our PnP solver on-board. Extensive experimental results, open source code, and a demonstration video are included.
\end{abstract}

\section{Introduction} \label{sec:Intro}
The term ``Perspective-n-Point problem'', or \emph{PnP} in short, was first introduced by Fischer and Bolles in~\cite{fischler1981random}.
The PnP problem aims to recover the position and orientation (6 degrees of freedom) of a calibrated monocular camera, by aligning its captured 2D image to a given 3D model (map) that describes the real-world. Solving this problem on each image in a real-time video from a moving camera mounted on a robot, such as an autonomous car~\cite{du2017car}, a humanoid~\cite{asfour2008karlsruhe}, or a vacuum cleaner~\cite{smirnov2015multi}, provides us the location and orientation of the robot in the world. Using pre-recorded models of the streets, Google's Liveview displays addresses and pointing arrows on top of the smartphone's captured video stream in real-time, to enable navigation using augmented reality based on a Visual Positioning System. 

The PnP is a fundamental problem in Computer Vision~\cite{andrew2001multiple,forsyth2011computer} with many other applications in Robotics~\cite{choi2012robust,nasser2020autonomous,rabinovich2020cobe,du2017car} and Augmented Reality~\cite{cutolo2014video,chacko2019augmented,quintero2018robot}.

\begin{figure}
    \centering
    \includegraphics[width=0.45\textwidth]{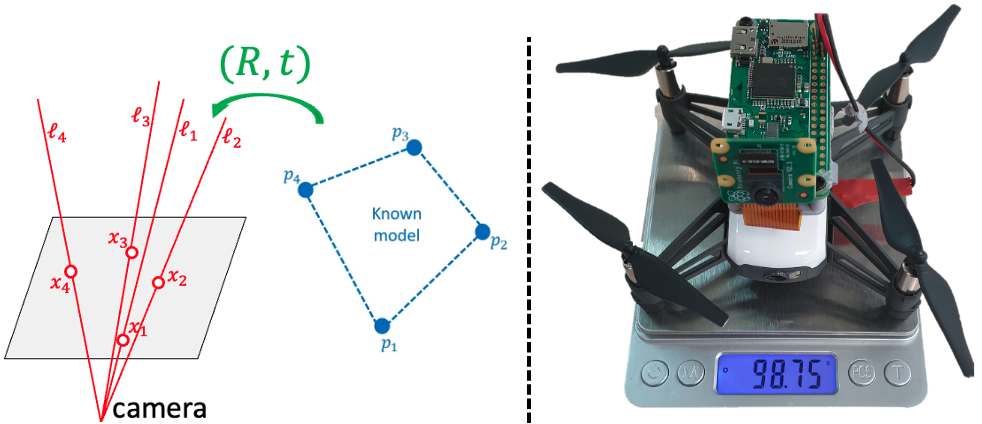}
    \caption{\textbf{(Left:) }An illustration of the PnP problem. \textbf{(Right:) }A toy drone equipped with a Raspberry PI zero micro-computer and a designated RGB camera. Our algorithms can run in real-time using this on-board system; see Section~\ref{sec:autonomousDrone}.}
    \label{fig:drone_with_RPI}
\end{figure}

\subsection{Main challenge: Lightweight Optimal Solver}
While there are dozens of papers that aims to solve the PnP problem, there is still a serious remaining challenge in the context of real-time robotics, especially for weak and lightweight IoT devices as in robotics and drone applications. 

Existing \emph{provable} solvers for the PnP problem typically use external packages such as GloptiPoly~\cite{henrion2009gloptipoly}, SOSTools~\cite{prajna2002introducing,schweighofer2008globally} or SeDuMi~\cite{sturm1999using, schweighofer2008globally}. Other solutions, either heuristics, approximations, or methods which solve alternative cost functions are discussed in Section~\ref{sec:RelatedWork}.

The main disadvantages of the above optimal solvers in the context of our robotic application are:
\textbf{(i) Memory. }The very generic libraries above require both RAM and external memory, both of which are very limited, especially when it comes to IoT micro-computers such as the Raspberry PI zero (RPI0) micro-computer which we use in our case, that has 1GHz single-core CPU 512MB RAM; see Section~\ref{sec:autonomousDrone}.
\textbf{(ii) Running time. }Those libraries are built to tackle a very wide range of optimization problems; they are not tailored for our problem. Thus, their running time is far from being optimal. 
\textbf{(iii) deployment dependencies. }Compilation and integration is often hard or even impossible, especially for non-Windows operating systems or non-Intel/AMD CPUs, as in our RPI case. Recent results such as~\cite{schweighofer2008globally} and GPOSolver~\cite{heller2016gposolver} were able to remove dependencies on commercial tools such as MATLAB. However they still depend on optimization libraries such as SeDuMi~\cite{sturm1999using} Mosek~\cite{mosek2015mosek}.

\subsection{Our Contributions}

\textbf{(i) Theoretical guarantees. }We propose the first algorithm that, in theory, returns optimal provable results for input data that satisfies some weak assumption which was satisfied in all our experiments. Our algorithm runs in $O(\log\Delta \cdot \log\log(1/\varepsilon))$ time where $\eps>0$ is the accuracy and $\Delta$ is the model precision; see Assumption~\ref{mainAssumption}, Theorem~\ref{theorem:PnP}, and Algorithm~\ref{Alg:PnP}. 

\textbf{(ii) Self-contained algorithm. }Our proposed novel algorithm is self-contained and requires only few lines of code to implement. It does not depend on commercial tools or optimization libraries. It only requires standard and widely used libraries such as Eigen~\cite{guennebaud2010eigen}.

\textbf{(iii) Experimental results. }We provide extensive experimental results on our own collected real world datasets with ground truths. We empirically demonstrate that our algorithm consistently provides more precise results as compared to the competing methods; see Section~\ref{sec:ERReal}.

\textbf{(iv) Real-time system. }As an application, we build our very own light-weight, low-cost, autonomous, and on-board toy-drone navigation system, which is based on a RPI0 with our novel PnP solver; see Section~\ref{sec:autonomousDrone} and the \href{https://drive.google.com/file/d/1L2ZvqabVHTkxpr6h4zoNFf8I_2orf9yL/view?usp=sharing}{video}.

\textbf{(v) Open code. }We provide full open source code for our algorithm, which can run in real-time on IoT devices~\cite{opencode}.

\subsection{Novelty}
To solve the above challenges we designed a simple algorithm, called Newton-PnP, or NPnP in short.

\textbf{(i) Newton's PnP. }To provide a self contained algorithm with competitive performance, we follow~\cite{schweighofer2008globally} and formalize the PnP as an instance of a semidefinite programming (SDP) problem in~\eqref{eq:SDP}, and utilize the simple Newton's method for solving this objective, without external sophisticated solvers.

However, the classic Newton's method alone does not support non-equality constraints as the ones that arise in our problem. Hence, we also combine the barrier method~\cite{boyd2004convex}, as explained in Section~\ref{sec:barrier}. The only external library required is Eigen for basic linear algebra that is common in robotics and real-time applications. In particular, it supports the LAPACK interface that is provided by CPU manufacturers for direct hardware implementations, simply by downloading their packages. In particular, our implementation was easily compiled on a RPI0 and a standard laptop.

\textbf{(ii) Focus on the dual problem. }The first issue when using the barrier method for tackling the SDP instance above, which is called the primal problem, is that the number of free variables is $241$, as also explained in~\cite{schweighofer2008globally}. This would be much too slow for our real-time drone application that runs on an RPI0.
To this end, we observe that the corresponding dual optimization problem requires only $70$ unknown variables; see Section~\ref{sec:SDPDual}. An important observation is that the duality gap is zero; see proof of Theorem~\ref{theorem:PnP}. That is, an optimal solution to the dual problem corresponds to the optimal solution of the primal version of the problem, with no approximation gap.

\textbf{(iii) Suggesting a provably good initial solution. }The barrier method uses Newton's method with equality constraints, which is an iterative algorithm that requires an initial solution. This solution must: (i) be feasible, i.e., be in a set that satisfies the constraints of our dual problem, and (ii) the value of the objective function at this initial point must be close to the optimal objective value, since the running time depends linearly on this distance; see Theorem~\ref{theorem:PnP}.

To this end, we first observe that many of the parameters in our optimization problem remain fixed between different PnP instances. We leverage this observation, along with careful tuning of the barrier method's parameter, to compute a good initial solution for our problem. This solution is computed in advance and is independent of the input; see Section~\ref{sec:initialGuess}.

\subsection{Related Work} \label{sec:RelatedWork}
The PnP problem has been intensely studied in literature, where the proposed methods are broadly categorized into either iterative or non-iterative methods. 

The iterative methods usually aim to iteratively minimize complex cost functions, which makes them potentially slow and non-optimal due to local minima of the cost function. However, such method achieve high accuracy when converging properly, and can handle arbitrary number $n$ of input pairs. Examples include~\cite{dementhon1995model,oberkampf1996iterative,lu2000fast,schweighofer2006robust}.

In comparison, the non-iterative methods tend to be unstable in the presence of noise~\cite{lepetit2009epnp}, and in many cases gain speed by alternating or approximating the target cost function.
Examples include~\cite{fischler1981random, dhome1989determination, quan1999linear, fiore2001efficient, ansar2003linear}, whose running times are polynomial in the number $n$ of input pairs.
To enhance the stability of such methods, one can leverage additional redundant points, as in, for example, the well known Direct Linear Transformation (DLT) algorithm~\cite{abdel2015direct}.

Among the most notable non-iterative results is the popular work of Lepetit et al.~\cite{lepetit2009epnp}, termed EPnP, which reduced the problem into recovering a set of $4$ virtual control points, where the resulting quadratic polynomials were solved with simple linearization techniques in $O(n)$ time. 
Follow-up works have tried to enhance the stability of EPnP by replacing the linearization with polynomial solvers. Among those is the work of Li et al.~\cite{li2012robust}, termed RPnP, which explicitly retrieves the roots of a seventh degree polynomial. However, the proposed improvements have still been shown to be unstable~\cite{zheng2013revisiting}. 
To resolve these drawbacks, Hesch and Roumeliotis~\cite{hesch2011direct} developed the direct least squares (DLS) method requiring $O(n)$ time. Unfortunately, they parameterized rotation by using the Cayley representation, which is degenerate in many cases. The accuracy deteriorates seriously when the camera pose approaches these singularities.

\textbf{Branch and Bound. }In a different line of works, branch and bound (BnB) methods have been proposed. In~\cite{hartley2007global}, Hartley et al. proposed a BnB algorithm which solves an approximated version of the problem, taking advantage of the rotation matrix constraints. Olsson et al.~\cite{olsson2006optimal} proposed a similar algorithm which leverages quaternion representation for the rotation matrix. Unfortunately, such BnB algorithms are unpractical due to their tremendous computational cost.

\textbf{Globally optimal and efficient methods. }The limitations mentioned above led to a search for algorithms that are both provably globally optimal and efficient. For example, \cite{kneip2014upnp} utilizes Grobner basis, while
the great works~\cite{schweighofer2008globally, zheng2013revisiting} reduce the problem to some instance of semidefinite programming (SDP). 
Common SDP solvers use primal-dual interior-point methods~\cite{sturm1999using} \cite{andersen2000mosek}. The most commonly used solution approaches are Mehrotra's predictor-corrector based algorithms~\cite{mehrotra1992implementation}\cite{sturm1997primal}. 
Mehrotra's approach tackles the Karush-Kuhn-Tucker conditions of the primal and dual optimization problem to reach a global optimum. As for the PnP case, these equations produce $2\cdot 241 + 70 = 552$ variables in the Mehrotra's approach, which is far more than only $86$ in our modified approach; see Section~\ref{sec:method}.

Our algorithm draws inspiration from those theoretically globally optimal methods, but aims to be faster in practice.

\section{Preliminaries} \label{sec:preliminaries}
In this section we first give some notations, and then we formally define the PnP problem discussed in the previous sections, and reduce the problem to solving a set of polynomials of constant degree.

\textbf{Notations. }We define the set $\Alignments = \br{(R,t) \mid R \in \SO(3), t \in \REAL^3}$ to be the set of all paired rotation matrices and translation vectors. For $x \in \REAL^m$, $X \in \REAL^{n\times m}$ and $0\leq i<j\leq n$, we denote by $x_{i:j} \in \REAL^{j-i+1}$ and $X_{i:j} \in \REAL^{(j-i+1)\times m}$ the slice of $x$ and row slicing of $X$ respectively, and by $mat(x) \in \REAL^{\sqrt{n}\times\sqrt{n}}$ the row stacking of $x$ into a square matrix, when possible.

\subsection{Problem Formulation} \label{sec:problemFormulation}
Let $P = \br{p_1,\cdots,p_n} \subseteq \REAL^3$ be a set of 3D (model) points and let $X = \br{x_1,\cdots, x_n} \subseteq \REAL^2$ be a corresponding set of 2D (observed) points in a calibrated camera frame.
Since the intrinsic camera parameters are given, each pixel $x_i \in X$ corresponds to a 3D line $\ell_i$ passing through the camera center and the pixel $x_i$ in the frame, whose direction vector is $v_i \in \REAL^3$; see Fig.~\ref{fig:drone_with_RPI}. The pair $(P,L)$ form the input for the PnP problem, where $L$ consists of all the 3D lines that correspond to points in $X$.

\textbf{The goal }is to recover an alignment $(R^*,t^*)\in \Alignments$ that minimizes the sum of squared Euclidean distances $\dist^2(Rp_i+t, \ell_i)$ between every $p_i \in P$ after applying the transformation, and its corresponding line $\ell_i$. Formally, the PnP objective reads:
\begin{equation} \label{eq:LossPnP}
\begin{split}
\min_{(R,t)}\sum_{i=1}^n \cost((P,L),(R,t)) & := \sum_{i=1}^n \dist^2(Rp_i+t, \ell_i),
\end{split}
\end{equation}
over every $(R,t) \in \Alignments$.

As detailed in~\cite{schweighofer2008globally}, we can recover the optimal vector $t^*$ as a linear function of the optimal rotation matrix $R^*$ by setting the derivative of~\eqref{eq:LossPnP} with respect to $t$ to zero. 
Plugging $t^*$ back, \eqref{eq:LossPnP} can be rewritten as
\begin{equation} \label{eq:LossSimplified}
\begin{split}
\min_{r} & \quad r^T M r,\\
\text{s.t. } & r = v(R) \in \REAL^9, \text{ }R\in\SO(3),
\end{split}
\end{equation}
for some known matrix of coefficients $M \in \REAL^{9\times 9}$, and where $v(R) \in \REAL^9$ is the row stacking of $R$.


\textbf{Using quaternions. }Due to the difficulty of handling the constraint $R\in\SO(3)$ above, a quaternion based representation for $R$ is used instead.
Let $q = (q_1,q_2,q_3,q_4) \in \REAL^4$, and define $r(q) \in \REAL^9$ as
\[
r(q) = (q_1^2+q_2^2-q_3^2-q_4^2, 2q_2q_3-2q_1q_4, \ldots).
\]
Now~\eqref{eq:LossSimplified} can be rewritten using $r(q)$, where the constraint $R \in \SO(3)$ is implicitly enforced using the quaternion representation and the constraint $\norm{q}^2 = 1$. The PnP problem can thus be reduced into the following system of polynomials:
\begin{equation} \label{eq:Loss}
\begin{split}
\min_{q \in \REAL^4} \  & r(q)^T M r(q),\\
\text{s.t. } & \norm{q}^2=1.
\end{split}
\end{equation}

\subsection{Polynomial System Solvers} \label{sec:PSS}
There are multiple different approaches for solving polynomial systems as in~\eqref{eq:Loss}, which include: (i) classical solutions which utilize combinatorics, (ii) Grobner basis solvers~\cite{buchberger1998grobner}, and (iii) Semidifinite programming (SDP)~\cite{vandenberghe1996semidefinite} using the Sum Of Squares (SOS) decomposition~\cite{prajna2002introducing}. 
Assuming there are $m$ equations of constant rank, each containing $d$ unknowns, all the above methods require theoretically $m^{O(d)}$ running time.
However, the classical solutions tend to be very impractical due to the large constants hidden in the $O$ notation. Similarly, the Grobner basis methods are not only impractical in most cases, but are also numerically unstable. To this end, we adopt the SOS approach. 

\textbf{SOS hierarchy. }The SOS approach is governed by some relaxation level $r$, which controls the trade-off between accuracy and running time; larger relaxation level corresponds to more accurate but slower results.
These levels are commonly referred to as The \emph{Lasserre hierarchy}~\cite{lasserre2009moments}.
The computational time increases exponentially fast when raising the relaxation level.
The difficulty lies in balancing the accuracy and the running time, as to provide a theoretically fast algorithm which is also always accurate in practice.

\textbf{PnP via SOS relaxation. }The above SOS relaxation approach was also used in~\cite{schweighofer2008globally}, where the first Lasserre hierarchy level was chosen in order to solve~\eqref{eq:Loss}. 
Using this first level already leads to relatively high computational times, but produces accurate results.
In our work, we utilize the same hierarchy level, for which we empirically obtained accurate results in all our experiments.
However, as explained in Section~\ref{sec:Intro} and in the following sections, we provide an alternative and much faster implementation, which can run faster by up to x$3$ than the alternative algorithms which utilize the same SOS approach as ours, e.g.,~\cite{schweighofer2008globally}. This is by exploiting the barrier method, a smart initial guess, along a range of enhancements and techniques. This enables us to develop an efficient algorithm which can run in real-time on a microcomputer; see more details in Section~\ref{sec:ER}.


\section{Our Approach -- PnP via SDP Dual} \label{sec:method}
In this section we present our main provable algorithm for the PnP problem.
We aim to cast the PnP problem as an instance of a semidefinite programming (SDP) problem, and then solve this instance via the dual formulation only. Considering the dual formulation reduces dramatically the number of variables, as well as the practical running time.

The quaternion-based formulation in~\eqref{eq:Loss} is relaxed to an SDP optimization problem, as follows.
\begin{equation} \label{eq:SOS}
\begin{split}
\min_{\gamma} & -\gamma\\
\text{s.t. } & r(q)^T M r(q) -\gamma -\lambda(q)(\norm{q}^2-1) \text{ is sum of squares},
\end{split}
\end{equation}
where $\lambda(q)$ is a polynomial of degree $2$ with unknown coefficients.

\textbf{SOS relaxation. }Problem~\eqref{eq:Loss} is equivalent to Problem~\eqref{eq:SOS} if we replace ``.. is sum of squares'' (SOS) by  ``...is a positive polynomial''. Unfortunately, while an SOS polynomial is always a positive polynomial, the opposite is not true. Indeed, there exists a very specific and synthetic example of a PnP input that SOS fails to solve, at least in its first degree~\cite{alfassi2021non}.

Nevertheless, recent studies showed that, in the case of computer vision applications, the SOS relaxation usually yields the optimal solution in practice. See possible explanations in~\cite{brynte2022tightness} and many references therein. However, SOS method either provides a proof, called \emph{certificate}, that the returned solution is $\eps$-optimal, or states that it failed.  

Indeed, in our experimental results we \emph{never} encountered such a failure of the SOS relaxation, which explains why our algorithm performs so well in Fig.~\ref{fig:experimentalResults}. The required condition on the input is a known open problem~\cite{alfassi2021non} so in Theorem~\ref{theorem:PnP} we simply assume that the relaxation holds as follows.
\begin{assumption}[SOS is optimal] \label{mainAssumption}
Let $(P,L)$ be an input pair of points and lines.
Given $y\in\REAL^{70}$ that maximizes~\eqref{eq:SDPDual} up to an additive error of $\eps>0$, a pair $(R,t)\in \Alignments$ that satisfies~\eqref{eq:epsApprox} can be computed in $O(1)$ time.
\end{assumption}

\textbf{Further relaxations. }For $q = (q_1,q_2,q_3,q_4) \in \REAL^4$, let $m(q) = (1, q_1, q_2,q_3,q_4,q_1^2,q_1q_2,\ldots) \in \REAL^{15}$ be the vector of monomials of $q$ up to degree $2$.
To enforce that a polynomial $p(q)$ is SOS, one must find a positive semidifinite matrix $\PSD \succeq 0$ such that $p(x) = m(q)^T\PSD m(q)$.
Hence, \eqref{eq:SOS} reduces to minimizing $-\gamma$ under the constraints that $r(q)^T M r(q) -\gamma -\lambda(q)(\norm{q}^2-1) = m(q)^T\PSD m(q)$ and $\PSD \succeq 0$.

By simply comparing coefficients between both sides of the above constraint, similar to the notation in the book~\cite{boyd2004convex} by Boyd et al., we obtain the following SDP primal problem
\begin{equation} \label{eq:SDP}
\begin{split}
\min_{x}& \quad c^Tx \\
\text{s.t. } & Ax=b \text{ and }\mat(x_{16:240}) \succeq 0,
\end{split}
\end{equation}
where $c = (-1,0,\cdots,0) \in \REAL^{241}$, $A \in \REAL^{70\times 241}$ is a constant matrix that represents the coefficients comparison, $b \in \REAL^{70}$ is a vector that depends only on the matrix $M$, and $x \in \REAL^{241}$ is a vector of unknowns such that: $x[0]$ represents $\gamma$, $x_{1:15}$ contains the $15$ unknown coefficients of the polynomial $\lambda(q)$, and $x_{16:240}$ contains the $15^2 = 225$ unknowns of the matrix $\PSD$.
The above equation contains $241$ variables and $70$ equations (as the number of monomials of the four quaternions with degree at most $4$). 

\textbf{Input dependence. }Observe that while the vector $b$ depends on the PnP problem's input, the matrix $A$ and vector $c$ are constant. To this end, the PnP instance at hand is encapsulated in the vector $b$ only.

\subsection{SDP Dual Formulation} \label{sec:SDPDual}
The primal problem~\eqref{eq:SDP} has a dual formulation of the form
\begin{equation} \label{eq:SDPDual}
\begin{split}
\max_{y \in \REAL^{70}}& \quad b^Ty \\
\text{s.t. } & (c-A^Ty)_{0:15} = \bold{0},\\
& \mat((c-A^Ty)_{16:240}) \succeq 0.
\end{split}
\end{equation}
For more details on the relation between the primal and dual problems see Section $5.2$ in~\cite{boyd2004convex}.

\textbf{Existing solvers. }
Software such as SeDuMi~\cite{sturm1999using} and Gloptipoly~\cite{henrion2009gloptipoly} aim to solve such problems via the Centering-Predictor-Corrector method~\cite{sturm1997primal}, where both the primal and the dual problems, \eqref{eq:SDP} and~\eqref{eq:SDPDual} respectively, are solved simultaneously.

\textbf{Why the dual formulation? }
The dual formulation in~\eqref{eq:SDPDual} has only $70$ variables, as compared to the primal problem~\eqref{eq:SDP} which has $241$. Furthermore, the solution of the primal problem only provides an SOS decomposition for the SDP objective along with the minimal value of the objective function. However, we are interested in recovering the quaternion vector $q$ from ~\eqref{eq:Loss} which minimizes the objective. This quaternion can be extracted from the vector $y$ of the dual problem, as detailed in~\cite{lasserre2009moments}.
To this end, we aim to solve the dual problem only, via the barrier method~\cite{boyd2004convex}.
As of correctness, by Section 5.9.1 of~\cite{boyd2004convex}, strong duality exists in our problem. In other words, the optimal solutions for the primal and dual problems are the same.

\subsection{Barrier Method for Solving the Dual SDP Problem} \label{sec:barrier}
The barrier method replaces a constraint in an optimization problem with a self-concordant convex barrier to the objective function; see more details in~\cite{boyd2004convex}.
A suggested barrier for a PSD constraint is the log-determinant function. The barrier method incorporates a weight $t$ that is multiplied by the original objective function, which increases over time, thus the summation of the two components slowly converges to the global minimum or maximum) of the original objective:
\begin{equation} \label{eq:barrier}
\begin{split}
\max_{y}& \quad f_{t}(y) := t\cdot b^Ty + \log(\det(\mat((c-A^Ty)_{16:240}))) \\
\text{s.t. } & (c-A^Ty)_{0:15} = \bold{0}.
\end{split}
\end{equation}
Observe that $A$ and $c$ are constants, as detailed in~\eqref{eq:SDP} above, and $t$ and $b$ are parameters of this function as they are not constant; $t$ will change during the optimization process, and $b$ depends on the input.

We aim to solve the above maximization problem using Newton steps with equality constraints, as explained in section 10.2 of~\cite{boyd2004convex}. Newton steps can be performed relatively fast.
The most time consuming operation is formulating the hessian, whose dimensions are $70 \times 70$. Each Newton step with equality constraints requires solving a $(70+16)\times(70+16)$ linear equation system, and then performing a line search over that direction; see Algorithm~\ref{Alg:PnP}. The line search is an exact-line-search; see Section 9.2 of~\cite{boyd2004convex}.

The convergence analysis given in Section 10.2 of~\cite{boyd2004convex} conclude that applying Newton’s method with equality constraints is exactly the same as applying Newton’s method to the reduced problem obtained by eliminating the equality constraints; the convergence guarantees and bounds of Newton’s method for unconstrained problems are also relevant for the linearly constrained instances.
Thus, the constraint in~\eqref{eq:SDPDual} does not affect our algorithm's theoretical convergence and guarantees in what follows.

\subsection{Initial Guess} \label{sec:initialGuess}
As mentioned in Section~\ref{sec:Intro}, the bound over the number of required Newton steps depends linearly on the distance between the initial objective function value $f_t(y_0)$ and its optimal value $f_t(y^*)$, for the current weight $t$.

To this end, we observe that the matrix $A$ in~\eqref{eq:SDP}--\eqref{eq:barrier} depends on the PnP problem, but it is independent of a specific PnP instance (the input sets of paired points and lines), which is encoded in the vector $b\in\REAL^{70}$. Also, if we define $h(y)=\log(\det(\mat((c-A^Ty)_{16:240})))$, the vector
\begin{equation}\label{9a}
\tilde{y}:=\arg\max_{y\in C} \lim_{t\to 0}f_{t}(y)=\arg\max_{y\in C} h(y)
\end{equation}
is also independent of $b$. Hence, $\tilde{y}\in\REAL^{70}$ above can be computed once (offline) for all future instances. Indeed, it is encoded in our open code~\cite{opencode}; see also Algorithm~\ref{Alg:PnP}.

\subsection{Algorithm}
In this section we provide our main algorithm for solving the PnP problem via the SDP Dual formulation and the barrier method, as explained in the previous section; see Algorithm~\ref{Alg:PnP}. Our main claim is given in Theorem~\ref{theorem:PnP}.

\textbf{Overview of Algorithm~\ref{Alg:PnP}.}
Algorithm~\ref{Alg:PnP} takes as input a paired set of 3D points and lines $(P,L)$ as described in Section~\ref{sec:problemFormulation}, as well as the desired accuracy $\varepsilon > 0$. 
The algorithm aims to output an alignment $(R,t)$ that satisfies
\begin{equation} \label{eq:epsApprox}
\cost((P,L),(R,t)) \leq \min_{(R^*,t^*)}\cost((P,L),(R^*,t^*)) + \eps,
\end{equation}
where the minimum is over every pair in $\Alignments$.
This is by maximizing the objective function in~\eqref{eq:SDPDual}.

We first compute the matrix $A$ and vectors $b$ and $c$ from Eq.~\eqref{eq:SDP}--\eqref{eq:barrier}. Then, at Line~\ref{line:initialGuess}, we recover an initial guess $y \in \REAL^{70}$ for Eq.~\eqref{eq:barrier}, as described in Section~\ref{sec:initialGuess}. 
After that, we tune some parameters which are necessary for bounding the running time of the algorithm; see Theorem~\ref{theorem:PnP}.

In the loop at Line~\ref{line:tloop} we apply the barrier method, where at each iteration we: (i) utilize Newton's method (Lines~\ref{line:newtonStart}--\ref{line:newtonEnd}) to minimize $f_{t}(\cdot)$ from~\eqref{eq:barrier}, (ii) multiply $t$ by $\mu=50$, and (iii) repeat.
Each Newton step requires computing the Hessian and the gradient of this function in order to recover the optimization direction $v$. The optimal point along the direction $v$ is recovered via exact line search at Line~\ref{line:lineSearch}. The condition for the number of Newton iterations is derived in Section $9.5.2$ of~\cite{boyd2004convex}.
Lastly, we extract the desired output from the recovered $y$.

\SetKwRepeat{Do}{do}{while}
\begin{algorithm}[h]
    \caption{\textsc{$\PnPAlg(P,L, \eps)$}}
    \label{Alg:PnP}
    \SetKwInOut{Input}{Input}
	\SetKwInOut{Output}{Output}
    \Input{A pair of ordered sets of $n$ points $P$ and corresponding $n$ lines $L$, respectively, both in $\REAL^3$, and the desired accuracy $\eps > 0$.}
    \Output{An alignment $(R,t) \in \Alignments$; see Theorem~\ref{theorem:PnP}. }

    Set $A \in \REAL^{70 \times 241}$, $b \in \REAL^{70}$, and $c \in \REAL^{241}$ be as in~\eqref{eq:SDP}. \tcp{$A$ and $c$ are constant, while $b$ depends on the input $(P,L)$.}
    
    Set $y \in \REAL^{70}$ as the initial guess for the maximizer of $f_{0}(\cdot)$ from~\eqref{eq:barrier}, with $b = \bold{0} \in \REAL^{70}$ \label{line:initialGuess} \tcp{see Section~\ref{sec:initialGuess}. This is precomputed once.}
    
    Set $\varepsilon := \varepsilon / (\norm{b} \cdot \norm{y})$, $b' := b / \norm{b}$
    
    Set $t := 1/(140\Delta^2)$ \tcp{$\Delta \in O(1)$ related to the final-precision of our model}
    
    \While{$t < \frac{1}{\varepsilon}$ \label{line:tloop}} 
    {
        
        \Do {$g^T H^{-1}g \geq 2\cdot \varepsilon$ \label{line:newtonEnd}}  
        {
            Compute the gradient $g \in \REAL^{70}$ and the Hessian $H \in \REAL^{70 \times 70}$ of $f_{t}(y)$ from~\eqref{eq:barrier} when plugging $b=b'$. \label{line:newtonStart} \label{line:gradient}
            
            $Q := \begin{bmatrix} 
            H & A_{0:15} \\
            A_{0:15}^T & \bold{0}
            \end{bmatrix}$,  $z = \begin{bmatrix}
            -g \\
            (c-A^Ty)_{0:15}
            \end{bmatrix}$
            
            Compute $x^* \in \REAL^{86}$ such that $Qx^*=z$ \tcp{see Section $9.5$ in~\cite{boyd2004convex}.}
            
            Set the line search direction $v := x^*_{0:69} \in \REAL^{70}$ \label{line:lineSearchDirection}

            Apply exact line search to compute a constant $\lambda \in \REAL$ that minimizes $f_t(y + \Delta \cdot v)$ \label{line:lineSearch}
             \tcp{see Sections $9.2$ in~\cite{boyd2004convex}.}
            
            Update $y := y + \lambda \cdot v$
        }

        $t := 50 \cdot t$ \label{line:tupdate} \tcp{update the weight $t$}
    }

    Let $(R,t)$ be the corresponding alignment to $y$ \label{line:getRT} \tcc{see Assumption~\ref{mainAssumption}.} 

    \Return $(R,t)$
\end{algorithm}

The following theorem gives our main claim of correctness and running time for Algorithm~\ref{Alg:PnP}. While the running time depends on the model's finite-precision $\Delta$, in practice the number of iterations is almost always in $[15,30]$.
\begin{theorem} \label{theorem:PnP}
Let $P=\br{p_1,\cdots,p_n}$ and $L=\br{\ell_1,\cdots,\ell_n}$ be a pair of ordered sets of $n$ points and corresponding $n$ lines, respectively, both in $\REAL^3$. Let $\eps>0$, and let $(R,t)$ be the output of a call to $\PnPAlg(P,L,\eps)$.
Then, the pair $(R,t)$ can be computed in $O(n + \log\Delta \cdot \log\log(1/\varepsilon))$ time  such that, under Assumption~\ref{mainAssumption},
\[
\cost((P,L),(R,t)) \leq \min_{(R^*,t^*)}\cost((P,L),(R^*,t^*)) + \eps.
\]
\end{theorem}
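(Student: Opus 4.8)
The plan is to decompose the claim into its two independent assertions—the running-time bound and the correctness (accuracy) guarantee—and to establish each by invoking the standard barrier-method analysis from Boyd and Vandenberghe together with the structural facts already set up in Section~\ref{sec:method}. First I would fix notation: the algorithm solves the dual SDP~\eqref{eq:SDPDual} via the barrier problem~\eqref{eq:barrier}, so I would interpret the entire outer loop (Line~\ref{line:tloop}) as a textbook application of the barrier method with parameter $\mu=50$, and the inner loop (Lines~\ref{line:newtonStart}--\ref{line:newtonEnd}) as Newton's method with equality constraints, whose convergence is unaffected by the linear constraint by the reduction argument already cited from Section~$10.2$ of~\cite{boyd2004convex}.

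For the running time, the plan is to count three contributions. The $O(n)$ term comes from the one-time preprocessing: forming the matrix $M$ in~\eqref{eq:LossSimplified} and hence the vector $b\in\REAL^{70}$ requires aggregating over the $n$ input pairs, while $A$ and $c$ are precomputed constants. The outer loop runs $O(\log(1/(\eps t_0))/\log\mu)$ times where $t_0=1/(140\Delta^2)$ is the initial weight; since $\mu=50$ is a constant this is $O(\log\Delta+\log\log\cdots)$ up to the dependence on $\eps$. I would then invoke the barrier-method complexity bound (Section~$11$ of~\cite{boyd2004convex}): the number of \emph{outer} centering steps is $O(\log(m/(t_0\eps)))$ where $m$ is the number of PSD constraints (here the single $15\times 15$ block, so $m=O(1)$), giving the $O(\log\Delta)$ factor, and the number of \emph{inner} Newton steps per centering step, under the self-concordance of the log-det barrier and warm-starting from the previous center, is $O(\log\log(1/\eps))$ by the quadratic convergence region argument of Section~$9.6$. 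Each Newton step solves the fixed-size $86\times 86$ system $Qx^*=z$ and performs one exact line search, both in $O(1)$ time since all dimensions ($70$, $86$, $15\times 15$) are constants independent of $n$ and $\eps$. Multiplying the loop counts yields the claimed $O(n+\log\Delta\cdot\log\log(1/\eps))$.

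For correctness, the plan is to chain three equivalences. The barrier method guarantees that upon termination the returned $y$ is an $\eps'$-suboptimal solution of the dual~\eqref{eq:SDPDual}, where $\eps'$ accounts for the rescaling $\eps:=\eps/(\norm{b}\cdot\norm{y})$ and $b':=b/\norm{b}$ performed at the start (this rescaling is exactly what converts the additive error on the normalized objective $b'^Ty$ back into additive error $\eps$ on $b^Ty$). I would then use strong duality—the zero duality gap asserted in Section~\ref{sec:SDPDual} via Section~$5.9.1$ of~\cite{boyd2004convex}—to transfer this $\eps$-optimality from the dual value $b^Ty$ to the primal value $c^Tx=-\gamma$, i.e.\ to the optimal SOS bound on~\eqref{eq:Loss}. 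Finally, Assumption~\ref{mainAssumption} supplies the last link: from such an $\eps$-optimal $y\in\REAL^{70}$ one extracts in $O(1)$ time a pair $(R,t)\in\Alignments$ satisfying~\eqref{eq:epsApprox}, which is precisely the desired bound.

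The main obstacle I anticipate is making the error-accounting rigorous across the rescaling and the duality transfer. The normalization at Line~$3$ changes both the objective and the stopping tolerance, so I must verify that the Newton decrement stopping criterion $g^TH^{-1}g\ge 2\eps$ on the rescaled problem really certifies the un-normalized additive accuracy $\eps$ once combined with the outer-loop bound $t\ge 1/\eps$; this requires the standard inequality $f(y^*)-f_t(y)\le m/t$ relating the barrier parameter to the suboptimality gap, specialized to our constant $m$. A secondary subtlety is justifying the initial-weight choice $t_0=1/(140\Delta^2)$ and the precomputed center $\tilde y$ from~\eqref{9a}: I would argue that $\tilde y$ is a feasible analytic center of the constraint set (independent of $b$, hence reusable), so that $f_{t_0}(\tilde y)$ is within a constant factor of optimal, keeping the first centering step inside the region of quadratic convergence and thus contributing only $O(1)$ extra Newton steps rather than spoiling the $\log\log(1/\eps)$ bound.
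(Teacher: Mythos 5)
Your proposal is correct and follows essentially the same route as the paper's proof: the identical split into correctness (barrier method yields an $\eps$-optimal point for the dual~\eqref{eq:SDPDual}, strong duality via Section~$5.9$ of~\cite{boyd2004convex}, then Assumption~\ref{mainAssumption} for the $O(1)$ extraction of $(R,t)$) and running time (self-concordance of $f_t$ as the sum of a linear function and the log-det barrier, $O(\log\Delta)$ outer iterations from $t_0=1/(140\Delta^2)$, constant-dimension Newton steps, and the precomputed $b$-independent center $\tilde{y}$ of~\eqref{9a} as warm start, whose constant initial gap the paper makes explicit via the chain $f_{t_0}(y^*)-f_{t_0}(\tilde{y})\leq 2t_0\norm{b}\norm{y^*}\leq 1$ under the $\Delta$-bounded finite-precision model). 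If anything, you are slightly more careful than the paper on two points its proof glosses over, namely the $O(n)$ cost of forming $b$ and the error accounting through the rescaling at Line~$3$.
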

\begin{proof}
\textbf{Correctness.}
Fortunately, strong duality holds for our dual problem and the dual's dual (i.e., the primal), as the conditions for strong duality in Section $5.9$ of~\cite{boyd2004convex} hold.
Hence, solving~\eqref{eq:SDPDual} is equivalent to solving~\eqref{eq:SDP}. 

Let $y\in\REAL^d$ be the vector in hand at Line~\ref{line:getRT} of Algorithm~\ref{Alg:PnP}.
By Assumption~\ref{mainAssumption}, it suffices to prove that $y\in\REAL^{70}$ maximizes the dual problem~\eqref{eq:SDPDual} up to an additive error of $\eps$. 

It is known that the Log-Determinant function is the barrier of the semidefinite cone; see e.g.,~\cite{boyd2004convex}.
By this and the barrier method~\cite{boyd2004convex}, an $\eps$-approximation to~\eqref{eq:barrier} for $t=1/\eps$ minimizes~\eqref{eq:SDPDual} up to $\eps$. Finally, the constraint in~\eqref{eq:barrier} can be removed as explained in Section~\ref{sec:barrier}.
The resulting function in~\eqref{eq:barrier} is the sum of a pair of functions, $f_{t}(y)=g_t(y)+h(y)$.
The function $g_t(y)=tb^Ty$ is a linear function and thus a concave function, and it is easy to prove that $h(y)=\log(\det(\mat((c-A^Ty)_{16:240})))$ is also a concave function. Hence, $f_{t}$ is a concave function that can be maximized via Newton's method for every value $t$.

\textbf{Bounding the running time.}
The function $g(y)= t\cdot b^Ty$ is linear and hence self-concordant, and $h(y)= \log(\det(\mat((c-A^Ty)_{16:240})))$ is self-concordant as proven in Section $9.6.2$ of~\cite{boyd2004convex}. Therefore, our objective function $f_{t}(y) :=g(y) + h(y)$ is also \emph{self-concordant} (i,e., satisfies some condition on the ratio between its second and third derivatives). This is crucial for what follows.

We need to bound the number of Newton's iterations for a specific value of $t$, i.e, the number of iterations for the internal while loop in Algorithm~\ref{Alg:PnP}. We do this separately for the initial value of $t$, and for the other values of $t$.

\textbf{Initial value of $t$. }
Let $C \subseteq \REAL^{70}$ be the set of all vectors that satisfy the constraints in~\eqref{eq:barrier}, $t_0:=\frac{1}{140 \Delta^2}$ be the initial value of $t$, where $\Delta$ is the constant such that $\log{\Delta}$ is the number of bits used for storing variables in our finite-precision model.
We can now compute $\max_{y\in C} f_{t_0}(y)=\max_{y\in C} t_0 b^Ty+h(y)$,
using the initial solution $y_0=\tilde{y}$ above which (i) is in $C$ and thus feasible, and (ii) satisfies:
\begin{align}
\nonumber & f_{t_0}(y^*)-f_{t_0}(\tilde{y})
= t_0b^Ty^* + h(y^*) - (t_0 b^T\tilde{y}+h(\tilde{y}))\\ 
& \leq t_0 b^Ty^*-t_0 b^T\tilde{y} \label{smaller}\\
&\leq 2t_0  |b^Ty^*| \leq 2t_0 \norm{b}\norm{y^*} \label{smaller2}\\
& \leq 2t_0 \sqrt{70\Delta^2}\sqrt{70\Delta^2}
\leq 1. \label{smaller3}
\end{align}  
where~\eqref{smaller} and ~\eqref{smaller2} hold by the optimality of $\tilde{y}$ in~\eqref{9a} and $y^*$, respectively, and~\eqref{smaller3} hold since $y^*,b\in\REAL^{70}$ and the maximum absolute coordinate value is at most $\Delta$.

\textbf{Other values of $t$. } The running time for Newton's method within the barrier method for non-initial values of $t$, over a self-concordant objective function is bounded by $O(m(\mu-\log\mu)/\eps + \log\log(1/\eps)) = O(1/\eps)$ since the number of inequalities is $m\in O(1)$ and our weight multiplication constant is $\mu = 50$ in our case. See formal proof in Section $11.5.3$ of~\cite{boyd2004convex}.
The number of barrier method iterations is upper bounded by $O(\log(1/t_0)) = O(\log\Delta)$.

Therefore, the total Newton iterations is bounded by $O(\log\Delta/\eps)$, where each iteration takes $O(1)$ time.
\end{proof}

\newcommand{\ourPnP}{\texttt{ourPnP}}
\newcommand{\RPnP}{\texttt{RPnP}}
\newcommand{\EPnP}{\texttt{EPnP}}
\newcommand{\EPnPGN}{\texttt{EPnP+GN}}
\newcommand{\DLT}{\texttt{DLT}}
\newcommand{\SDP}{\texttt{SDP}}
\newcommand{\LHM}{\texttt{LHM}}
\newcommand{\GPO}{\texttt{GPO}}
\newcommand{\UPnP}{\texttt{UPnP}}
\newcommand{\OPnP}{\texttt{OPnP}}
\newcommand{\DLS}{\texttt{DLS}}
\newcommand{\RDLS}{\texttt{RobustDLS}}

\section{Experimental Results} \label{sec:ER}

\subsection{Pose Estimation using Real-World Data} \label{sec:ERReal}

We implemented our PnP Algorithm from Section~\ref{sec:method} in C++, and in this section we evaluate its empirical results on real-world datasets for pose estimation and autonomous drone navigation. Open source code can be found in~\cite{opencode}. The hardware used was a standard HP ZBook laptop with an Intel Core i7-10750H CPU @2.60GHzx12 and 32GB of RAM.
The results demonstrate that the proposed PnP Algorithm consistently achieves more accurate and stable results as compared to the state of the art and widely common PnP implementations.

\paragraph{Competing methods} Throughout the experiments, we consider the following PnP implementations:
\begin{enumerate}
    \item \ourPnP: A direct implementation of the proposed algorithm from Section~\ref{sec:method}.
    \item \SDP: An SDP relaxation of the PnP problem, which also utilizes SOS~\cite{schweighofer2008globally}.
    \item \EPnP: Expresses the ${n \ 3D}$ points as a weighted sum of four virtual control points~\cite{lepetit2009epnp}.
    \item \EPnPGN: Similar to \EPnP{} with an additional Gauss-Newton refinement.
    \item \DLS: Computes all pose solutions, as a minima of a non-linear least-squares cost function~\cite{hesch2011direct}.
    \item \RPnP: An official implementation of~\cite{li2012robust}.
    \item \DLT: An official implementation of~\cite{abdel2015direct}.
    \item \LHM: An official implementation of~\cite{lu2000fast}.
\end{enumerate}

\paragraph{Evaluation metric} Given a ground truth $(R^*, t^*)$ and some recovered $(R,t)$, we utilize the following translation and angle errors: $\norm{t-t^*}_2$ and $\abs{\alpha-\alpha^*}$, $\abs{\beta-\beta^*}$, $\abs{\gamma-\gamma^*}$, where $(\alpha,\beta,\gamma)$ and $(\alpha^*,\beta^*,\gamma^*)$ are the three Euler angles obtained from decomposing $R$ and $R^*$ respectively.

\paragraph{Real-World Datasets} For our experiments, we collected our own real-world datasets, as follows.

\textbf{Dataset (i): Real-world data from a webcam} 
To collect this data, an OptiTrack motion capture system consisting of $18$ cameras was deployed in a room along with a set of $12$ distinct ArUco markers. This setup is illustrated in the \href{https://drive.google.com/file/d/1L2ZvqabVHTkxpr6h4zoNFf8I_2orf9yL/view?usp=sharing}{supplementary video}. 
The 3D positions $P \subseteq \REAL^3$ of the ArUco markers in the OptiTrack's coordinates system were extracted using the OptiTrack system itself.
A calibrated Logitech C920 RGB camera was placed on a moving rig, and a set of IR markers were placed on top of this camera so it can be easily tracked. The rig was moved around in a continuous movement to form a rectangle with side lengths of roughly $2$x$1.5$ meters.
For every frame captured from the Logitech camera using $15$ FPS we: (i) detect the ArUco marker positions $X \subseteq \REAL^2$ which are visible in the image, and (ii) use the OptiTrack system to estimate the ground truth pose $(R_c,t_c)$ of the camera (via the IR markers). 

\textbf{Dataset (ii): Real-world data from a toy drone} 
This dataset was collected similarly to Dataset (i) above, but the Logitech camera and its moving rig were replaced with a toy micro drone, called DJI Tello, which is equipped with a small RGB camera, as in Fig.~\ref{fig:drone_with_RPI}. 
The drone was manually controlled and flown around the room. The 3D positions $P \subseteq \REAL^3$ of the ArUco markers were the same as in Dataset (i), and the detection of the 2D pixels $X$ in each frame was conducted in a similar manner as well.
This dataset aims to reflect a real world autonomous drone navigation experiment.
The goal is to recover the pose of the drone during its flight, by estimating the camera's pose via an external laptop. This is crucial for autonomous drone navigation tasks; see Section~\ref{sec:autonomousDrone} for such usage of our algorithm.
The drone's trajectory as captured using the ground truth motion capture system is presented in Fig.~\ref{fig:droneTrajectory}. The drone's position, as computed using our \NPnP{} is also presented.

\textbf{Noisy Datasets (i) and (ii). }
To test the resiliency of our algorithm in the presence of noise, we conducted experiments where synthetic noise was added to the datasets above. 
More formally, to create noisy datasets, we added noise drawn from a normal distribution with zero mean and an std of $k \in \br{0,1,\cdots,10}$ to the set $X \subseteq \REAL^2$ of 2D input points in each dataset, prior to their conversion to a set of 3D lines $L$; see Section~\ref{sec:preliminaries}. Each such test was repeated $10$ times and the results were averaged.

\paragraph{The experiment} The goal in this experiment was to estimate the camera's alignment $(R,t) \in \Alignments$ for every frame of the experiment, using the data $(P,L)$ collected in each of the datasets above (or their noisy version), via each of the PnP methods. The recovered pose was compared to the camera's ground truth pose extracted using the OptiTrack. The results are presented in Fig.~\ref{fig:experimentalResults}.

\begin{figure*}
    \centering
    \includegraphics[width=\textwidth]{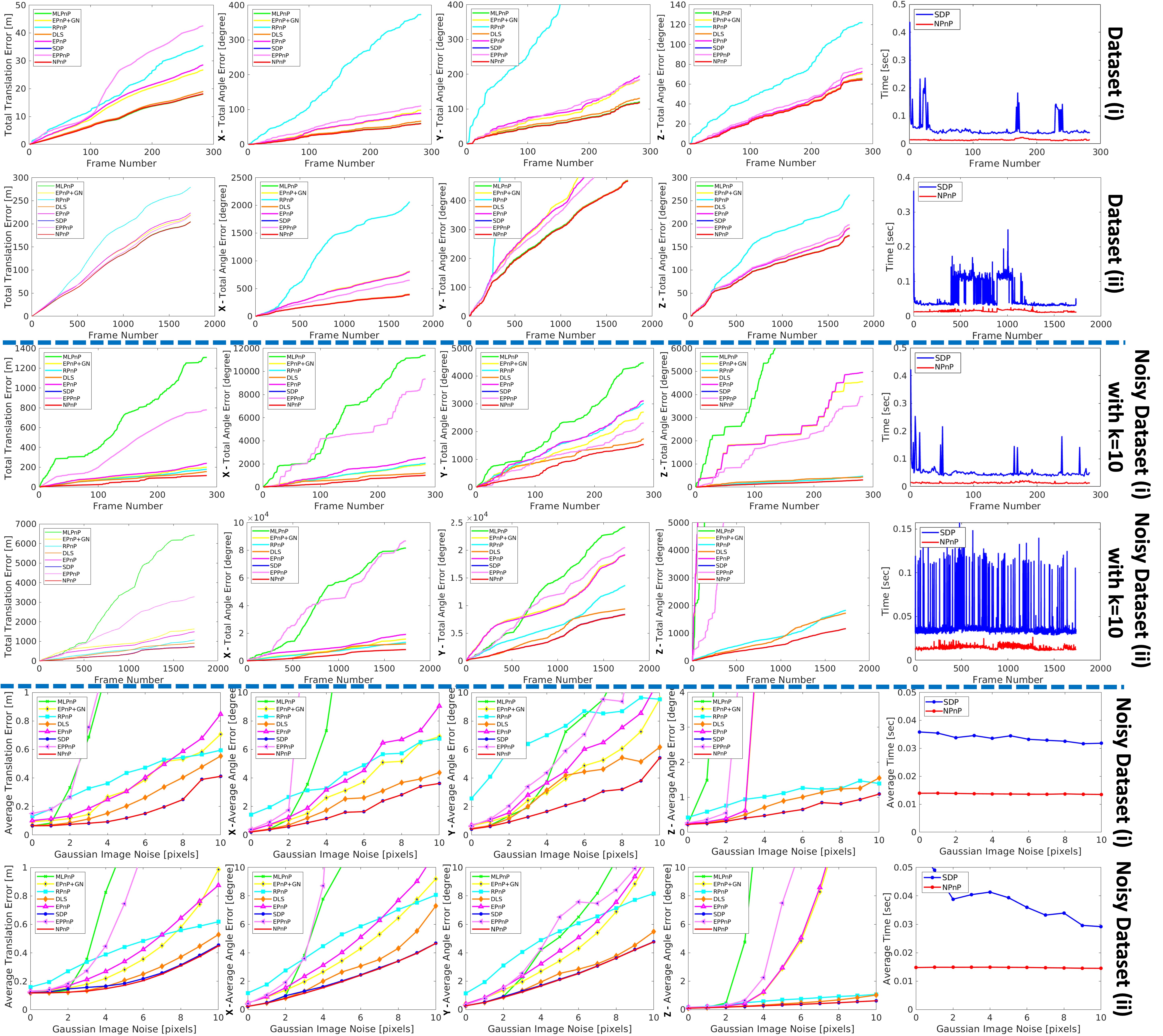}
    \caption{\textbf{Experimental results. }
    \textbf{Upper: }The $X$-axis represents the frame number. For each frame $i$, the $Y$-axis represents the sum of the positional or angular errors, as well as the sum over the computational time, for the frames $0,\cdots,i$. The first and second rows use the data from Dataset (i) and (ii) respectively.
    \textbf{Middle: }Similar to the upper part, but the noisy variants of the datasets were used, with an std of $k=10$. 
    \textbf{Lower: }The $X$-axis represents multiple different values of the noise's standard deviation $k=0,1,\cdots,10$. The $Y$-axis represents the averaged positional errors, angular errors, and computational times, across the entire noisy datasets.}
    \label{fig:experimentalResults}
\end{figure*}

\begin{figure}
    \centering
    \includegraphics[width=0.18\textwidth]{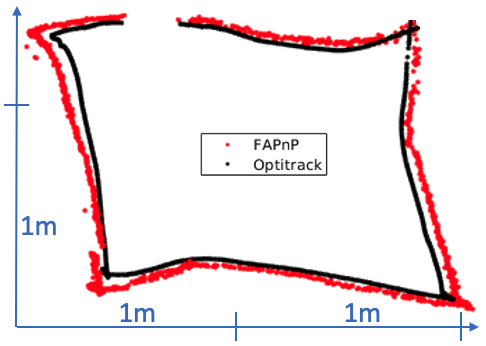}
    \caption{The drone's trajectory as recovered via: (i) a motion capture system, and (ii) via our \NPnP{} algorithm using the ArUco marker positions as the known 3D points $P$. The non-straight lines are caused due to the drone's cheap hardware.}
    \label{fig:droneTrajectory}
\end{figure}

\subsection{Autonomous Drone Navigation} \label{sec:autonomousDrone}
To demonstrate a potential usages of our fast and accurate PnP solver, we placed a micro computer equipped with a small camera, namely, a Raspberry PI Zero chip~\cite{upton2014raspberry} with its dedicated RGB camera, on top of the toy drone from the previous section. 
The goal was to perform autonomous navigation along a predefined route in an indoor environment, given a known 3D map of the space. Such a 3D map was constructed in a pre-processing step using a known SLAM system~\cite{mur2017orb}.
To our knowledge, this is the first such light-weight (<$100$gr), autonomous, and on-board nano-drone navigation system. The system is also low-cost (<$125\$$). The autonomous flight is presented in the  \href{https://drive.google.com/file/d/1L2ZvqabVHTkxpr6h4zoNFf8I_2orf9yL/view?usp=sharing}{supplementary video}.

\subsection{Overall Discussion. }
\label{sec:Discussion}
\textbf{Accuracy. }As presented in Fig.~\ref{fig:experimentalResults}, our \NPnP{} consistently achieves results with an error smaller by x$1.5$ up to x$10$, as compared to the completing methods. The only exception is the \SDP{}, which, as expected, outputs the exact same results as \NPnP.
As the graphs demonstrate, the globally optimal algorithms achieve smaller errors compared to the non-provable alternatives.

\textbf{Time comparison. }Firstly, we observe that, in practice, the number of Newton iterations that were actually needed was always a small constant ($[15,30]$), independent of $\Delta$. 
Secondly, as our main concern is the accuracy of the output results, our time comparison is focused only on the most accurate competing methods. As the graphs show, those methods are \NPnP{} and \SDP{}, which obtain roughly the same accuracy. As Fig.~\ref{fig:experimentalResults} shows, our naive implementation of \NPnP{} is already faster by up to x$3$ as compared to \SDP, and more consistent. We leave further code optimizations to our naive implementation for future work.

\section{Conclusions and Future Work}
We presented an SOS-based PnP solver that, unlike previous works, has both (i) provable guarantees on its running time and approximation error, and (ii) efficient implementation for real-time systems on weak IoT devices. Experimental results show that our algorithm is consistently more accurate than existing solvers, and faster then the state-of-the-art SOS solvers. The main challange was to have a self-contained algorithm that does not use external heavy solvers.
The companion video and open code show how to apply our \NPnP{} algorithm for real-time indoor navigation by using DJI's toy-drone and an on-board RPI0.

A main open problem is to remove Assumption~\ref{mainAssumption}. While our SOS solver always returned the optimal solution in practice as expected by~\cite{brynte2022tightness}, there is synthetic input where it should fail~\cite{alfassi2021non}.
Since in practice the number of required Newton steps was roughly $15$, we believe that the running time of our algorithm can be reduced to only $n+\log(1/\eps)$.

\bibliographystyle{abbrv}
\bibliography{references_pnp}

\end{document}